\documentclass[conference]{IEEEtran}
\IEEEoverridecommandlockouts

\usepackage{cite}
\usepackage{amsmath,amssymb,amsfonts}
\usepackage{algorithmic}
\usepackage{graphicx}
\usepackage{textcomp}
\usepackage{xcolor}
\def\BibTeX{{\rm B\kern-.05em{\sc i\kern-.025em b}\kern-.08em
    T\kern-.1667em\lower.7ex\hbox{E}\kern-.125emX}}

\usepackage{booktabs,array,multirow,cite,balance,placeins}
\usepackage[hidelinks]{hyperref}
\usepackage{amsthm}

\newtheorem{proposition}{Proposition}
\newsavebox{\resulttablebox}

\begin{document}

\title{CTLite: Lightweight Irregular Multivariate Time Series Forecasting with Reusable Continuous-Time Summaries
\thanks{\textsuperscript{*}Equal contribution. \textsuperscript{$\dag$}Corresponding author.} 
\thanks{This work was supported by the Deep Space Exploration Directed Cultivation Fund of USTC.}
}

\author{
\IEEEauthorblockN{Shusen Ma\textsuperscript{*}}
\IEEEauthorblockA{
\textit{Institute of Advanced Technology} \\
\textit{University of Science and Technology of China}\\
Hefei, China
}
\and
\IEEEauthorblockN{Xudong Zuo\textsuperscript{*}}
\IEEEauthorblockA{
\textit{Institute of Advanced Technology} \\
\textit{University of Science and Technology of China}\\
Hefei, China
}
\and
\IEEEauthorblockN{Yangchun Xie}
\IEEEauthorblockA{
\textit{Institute of Advanced Technology} \\
\textit{University of Science and Technology of China}\\
Hefei, China
}
\and
\IEEEauthorblockN{Yun-Bo Zhao\textsuperscript{$\dag$}}
\IEEEauthorblockA{
\textit{Department of Automation} \\
\textit{University of Science and Technology of China}\\
Hefei, China \\
ybzhao@ustc.edu.cn
}
}

\maketitle

\begin{abstract}
Irregular multivariate time series forecasting (IMTSF) requires capturing temporal dynamics and cross-variable dependencies from sparse, asynchronous observations. Existing methods often rely on learned temporal encoders, coupling historical representation construction with model optimisation and repeatedly processing the same observations. We propose CTLite, a lightweight forecasting framework that decouples historical summarisation from learned prediction. Specifically, a parameter-free continuous-time moment encoder summarises local levels and trends at multiple temporal scales while suppressing poorly supported slope estimates. A compact readout combines damped local extrapolation with a low-rank cross-variable correction that accounts for observation freshness, target availability, and forecast horizon. Since the historical summaries are independent of trainable parameters and forecast queries, they can be reused across optimisation steps and repeated predictions for an unchanged history without altering predictions or parameter gradients. Experiments on four real-world benchmarks show that CTLite achieves the best or second-best mean MSE and MAE among the evaluated methods with only 88–2,537 trainable parameters. Under matched GPU workloads at batch size 32, CTLite reduces inference latency by 18.6–87.3\% compared with the latest models, including the cost of historical encoding. Reusing summaries further reduces total runtime by approximately 52\% over 16 forecast calls on the same history compared with recomputing the encoder for every call, including the initial encoding cost. The code is at: https://github.com/SSMa913/NICLab-CTLite.
\end{abstract}

\begin{IEEEkeywords}
Irregular multivariate time series forecasting, continuous-time modelling, lightweight models, representation reuse.
\end{IEEEkeywords}

\section{Introduction}
Irregular multivariate time series arise when variables are measured asynchronously, and observations are missing. In clinical monitoring, environmental sensing, and activity recognition, a forecast must account for both the recorded values and the time over which they were observed. Two histories with the same values and ordering can imply different dynamics when their timestamps differ; two equally long histories can provide different amounts of evidence when their observation masks differ. Effective irregular multivariate time series forecasting (IMTSF) therefore requires a representation of temporal dynamics, observation support, and dependencies across variables.

Recent methods address these requirements through continuous latent dynamics, graph interactions, and temporal patches \cite{rubanova2019,debrouwer2019,yalavarthi2024,zhang2024,hipatch2025}. Efficient architectures have also become a direct focus: APN uses time-aware aggregation with a shallow predictor, while KAFNet revisits pre-alignment for efficient irregular forecasting \cite{liu2026,kafnet2026}. These advances raise a more specific question about where learned computation is needed. \textit{Can historical information be represented by fixed, compact statistics while a small learned readout supplies the forecasting capacity?}

\begin{figure}[t]
 \centering
\includegraphics[width=\linewidth]{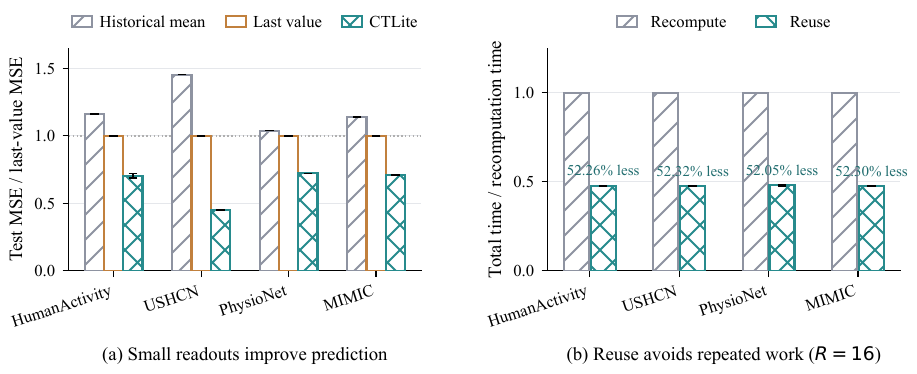}
\caption{Compact prediction and historical reuse. (a) Test MSE normalised by last-value forecasting on the same test records. (b) Time for 16 forecast calls on one unchanged batch, normalised by recomputing the history for every call. Reuse includes one initial encoding. Bars show five-seed means and sample Standard Deviation (SD).}
 \label{fig:motivation}
\end{figure}

This question matters when histories are processed repeatedly. Time-series databases such as Gorilla and Apache IoTDB provide efficient storage and access to historical measurements \cite{gorilla,iotdb}; a forecasting operator must then transform the retrieved observations into predictions. Training revisits the same windows across epochs, and inference can request several forecasts from an unchanged window. A learned encoder's cached representation becomes outdated when its parameters change. A parameter-independent representation can instead remain valid across readout updates, allowing historical aggregation to be performed once for a fixed window and preprocessing configuration. The design challenge is to obtain this reuse without reducing the history to an inadequate statistic.

Compact regular-grid forecasters suggest that a carefully chosen representation can reduce the amount of learned computation \cite{zeng2023,xu2024,lin2024,ma2026}. Irregular observations require additional structure. Array position does not measure elapsed time, a last value contains no trend information, and a local slope can be unreliable when only one observation effectively supports it. Moreover, sparsely observed variables may benefit from related variables, but a dense interaction matrix introduces a quadratic parameter cost. A useful compact representation must preserve local dynamics together with the evidence supporting them, and expose this information to an economical cross-variable predictor.

We propose \textbf{CTLite}, which separates fixed historical summarisation from learned prediction. Its continuous-time moment encoder computes weighted levels and slopes at several fixed decay rates directly from observed timestamps and masks. An effective-support factor suppresses poorly supported slopes. For each forecast query, a compact readout combines these summaries with a damped extrapolation step, whose growth saturates at a learned scale. A low-rank residual transfers recent values across variables, accounting for source freshness, target observation state, and forecast distance. Both residual branches are initialised at zero, so learning starts from the last-value predictor. The resulting model contains 88--2,537 trainable parameters across the four evaluated datasets.

Fig.~\ref{fig:motivation} illustrates two properties that motivate our design. First, a fixed historical representation with only 88–2,537 learned parameters reduces MSE by 27.82–54.71\% relative to last-value forecasting across four datasets. The gains show that a small predictor can extract useful information beyond the latest observation. Second, reusing the representation reduces the cumulative time for 16 forecast calls by 52.05–52.32\%, even when initial encoding is included. These observations motivate separating historical aggregation from learned prediction: the representation should retain useful temporal information while remaining reusable when the prediction weights change. The reuse experiment evaluates CTLite’s execution paths, not a unique advantage over every possible cached baseline.

Our contributions are summarised as follows:
\begin{itemize}
    \item We propose CTLite, a compact forecasting model that separates historical encoding from learned prediction. Its parameter-free encoder summarises irregular observations through masked moments at multiple temporal scales, while a support-aware local readout limits extrapolation from poorly observed trends. For an unchanged history and preprocessing configuration, these summaries can be reused across training updates and forecast queries without changing the model’s predictions or parameter gradients.
    \item We introduce a low-rank correction that complements local temporal predictions with information from other variables. The correction accounts for the time since each source variable’s last observation, the target variable’s observation availability, and the forecast horizon. This design incorporates observation timing and missingness into cross-variable information transfer, with an interaction parameter count that grows linearly with the number of variables for a fixed rank.
    \item We evaluate CTLite on four irregular forecasting benchmarks using five random seeds, with CTLite containing only 88–2,537 trainable parameters and achieving competitive forecasting accuracy. Matched measurements show lower full-forward latency and peak training memory. 
\end{itemize}

\begin{figure*}[t]
\centering
\raisebox{-0.0pt}{\includegraphics[width=516.01476pt]{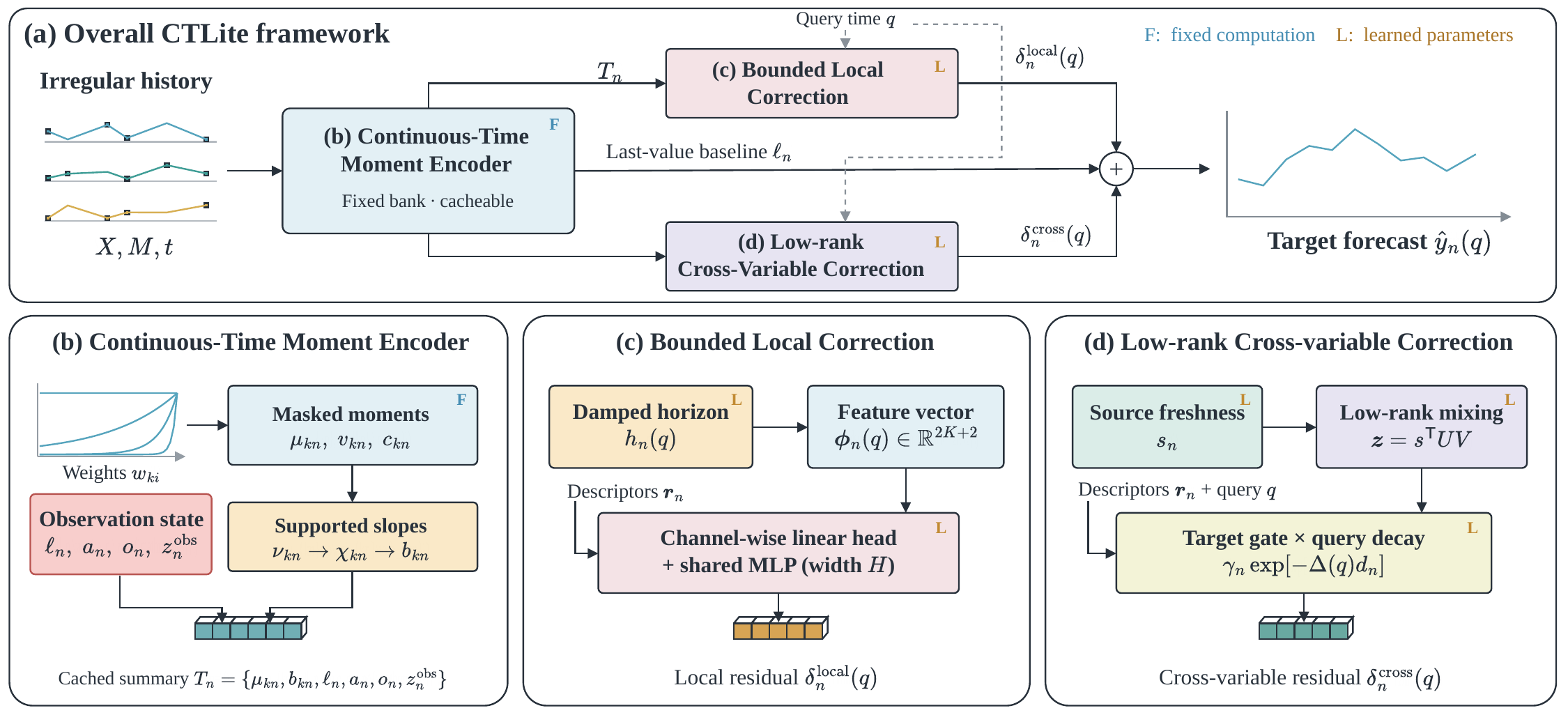}}
\caption{Overall CTLite framework. F and L denote fixed and learned computations, respectively.}
\label{fig:architecture}
\end{figure*}

\section{Related Work}
\subsection{Structured Forecasting on Regular Grids}
DLinear studies decomposition with linear prediction \cite{zeng2023}. FITS uses frequency interpolation, SparseTSF exploits cross-period sparsity, and MixLinear combines temporal and low-rank spectral modelling to obtain small predictors \cite{xu2024,lin2024,ma2026}. CycleNet explicitly models recurring patterns and predicts the residual \cite{cyclenet2024}. Other architectures impose structure through multiscale mixing in TimeMixer \cite{timemixer2024} and modern convolutional blocks in ModernTCN \cite{moderntcn2024}. Cross-variable modelling also takes several forms: PCDformer \cite{pcdformer2024} and iTransformer \cite{itransformer2024} attend over variable tokens, SOFTS aggregates and redistributes a shared series core \cite{softs2024}, and DUET uses temporal and channel clustering \cite{duet2025}. C3RL jointly leverages channel-independence and channel-mixing through a Siamese contrastive learning framework with adaptive loss weighting to improve multivariate time-series forecasting across different model architectures \cite{ma2026c3rl}. THCVformer jointly models variable heterogeneity, temporal dependencies, and inter-variable correlations to enhance prediction accuracy \cite{ma2026thcvformer}.
These methods illustrate how temporal and inter-variable structure can guide model design on regularly indexed sequences. CTLite constructs its historical features from the actual timestamps and masks of irregular observations.

\subsection{Representations for Irregular Observations}
GRU-D incorporates missingness and elapsed time into recurrent updates \cite{che2018}. Latent ODEs infer a latent initial state and model continuous dynamics \cite{rubanova2019}, whereas GRU-ODE-Bayes combines continuous evolution with updates at observations \cite{debrouwer2019}. Set-function models aggregate timestamped observations \cite{horn2020}, and mTAN learns attention-based temporal representations \cite{shukla2021}. Graph Neural Flows further couple continuous flows with graph-structured dependencies \cite{gneuralflow2024}.

Recent IMTSF methods organise observations and interactions at multiple resolutions. GraFITi represents irregular series using a sparse bipartite graph \cite{yalavarthi2024}. t-PatchGNN learns transformable temporal patches \cite{zhang2024}, and Hi-Patch builds hierarchical patch representations \cite{hipatch2025}. ASTGI learns adaptive spatio-temporal graph interactions \cite{astgi2026}, while DyH2 models heterogeneous interactions across observation and variable levels \cite{dyh2_2026}. TimeCHEAT combines local channel-dependent and global channel-independent modelling \cite{timecheat2025}. ReIMTS constructs recursive multiscale representations while preserving original timestamps \cite{reimts2026}.

Efficiency and alternative temporal representations are also active directions. APN combines time-aware aggregation with a shallow predictor \cite{liu2026}. KAFNet uses kernel-based pre-alignment and efficient frequency-domain modelling \cite{kafnet2026}. Two recent preprints explore observation-time-dependent linear maps in AiT \cite{wang2025} and learned, density-corrected basis representations in DNBNet \cite{li2026}. CTLite uses fixed exponential weights and masked moments of the observed samples. Keeping this historical computation independent of the learned readout is what permits its reuse across optimisation updates.

\subsection{Probabilistic Forecasting and Evaluation}
ProFITi models joint predictive distributions through conditional normalising flows \cite{profiti2025}, and MOSES develops marginalisation-consistent probabilistic forecasts \cite{moses2026}. CTLite produces point forecasts; its independent evaluation of query times concerns deterministic predictions rather than consistency of predictive distributions.

Recent benchmarks also broaden evaluation beyond individual model comparisons. TFB studies comprehensive and fair forecasting evaluation under unified protocols \cite{tfb2024}. Physiome-ODE provides irregular forecasting tasks derived from biological dynamical systems \cite{physiome2025}, and Time-IMM introduces irregular multimodal multivariate data \cite{timeimm2025}. PYRREGULAR provides a unified framework with classification benchmarks \cite{pyrregular2026}. These resources cover complementary tasks and data regimes. The present experiments use four established forecasting datasets and report comparisons under the stated preprocessing and execution settings.

\subsection{Historical Representation and Reuse}
Systems such as Gorilla and Apache IoTDB optimise historical data storage and access \cite{gorilla,iotdb}. CTLite operates on retrieved forecasting windows. With fixed model weights, learned historical representations can also be cached when their encoders are query-independent. CTLite's parameter-free encoder additionally allows the same stored summary to remain valid while the readout weights change. Reuse requires an unchanged history and fixed encoder configuration; the proposed implementation reconstructs the summary when either changes.

\section{Method}

Fig.~\ref{fig:architecture} illustrates CTLite and its three computational modules. Given an irregular history, the \textbf{Continuous-Time Moment Encoder} (Fig.~\ref{fig:architecture}(b)) constructs a reusable summary for each variable. For a query time, the \textbf{Bounded Local Correction} (Fig.~\ref{fig:architecture}(c)) uses the variable’s own summary to produce a local residual, while the \textbf{Low-Rank Cross-Variable Correction} (Fig.~\ref{fig:architecture}(d)) combines information across variables to produce a cross-variable residual. The two residuals are added to the Last-value baseline to obtain the Target forecast, as shown in Fig.~\ref{fig:architecture}(a).

\subsection{Task Formulation and Overall Framework}
For one record, let $X\in\mathbb R^{L\times N}$ contain historical values at timestamps $t_1,\ldots,t_L$. Here $N$ is the number of variables and $L$ is the padded history length. The mask $M\in\{0,1\}^{L\times N}$ has $M_{in}=1$ only when variable $n$ is observed at timestamp $t_i$. Missing values and padding have mask zero. Given future query times $q_1,\ldots,q_Q$, the predictor returns an estimate for each requested variable and time. The prediction interface takes only the historical values, timestamps, masks, and requested query times.

Times follow the nonnegative normalised convention of the data pipeline. The shared origin and centred historical times are
\begin{equation}
t_0=\max_{i:\sum_n M_{in}>0}t_i,\qquad u_i=t_i-t_0.
\label{eq:origin}
\end{equation}
We set $t_0=0$ if the entire history is empty. Let $\ell_n$ denote the latest observed value of variable $n$, or zero if it is unobserved. CTLite predicts
\begin{equation}
\widehat y_n(q)=\ell_n+\delta_n^{\mathrm{local}}(q)+\delta_n^{\mathrm{cross}}(q).
\label{eq:prediction}
\end{equation}
The local residual ($\delta_n^{\mathrm{local}}$) uses the variable's own history. The cross-variable residual ($\delta_n^{\mathrm{cross}}$) uses recent information from all variables. The origin is determined solely by the historical observations; query independence is established in the reuse analysis.

\subsection{Continuous-Time Moment Encoder}
\label{sec:moments}
\subsubsection{Weights}
Choose $K$ fixed nonnegative decay rates $\Lambda=\{\lambda_1,\ldots,\lambda_K\}$, including zero. Their dataset-specific values are reported in Table~\ref{tab:settings}. At scale $k$, observation $i$ receives weight:
\begin{equation}
w_{ki}=\exp\!\left(\operatorname{clip}(\lambda_k u_i,-60,0)\right).
\label{eq:weights}
\end{equation}
The function $\operatorname{clip}(z,a,b)=\min\{b,\max\{a,z\}\}$ limits its argument to $[a,b]$. A zero rate gives equal weights, while a larger rate emphasises recent observations. Rates are measured relative to normalised time. We clip the exponent below at $-60$ for numerical stability.

\subsubsection{Masked moments}
For a scalar sequence $f_i$, define
\begin{equation}
\begin{aligned}
D_{kn} & =\max\!\left\{\sum_i w_{ki}M_{in},\epsilon\right\},\\
\mathcal E_{kn}[f] & =D_{kn}^{-1}\sum_i w_{ki}M_{in}f_i, \\
\epsilon & =10^{-6} .
\end{aligned}
\label{eq:average}
\end{equation}
The index $n$ determines which observations contribute. The denominator floor $\epsilon$ prevents division by zero. Masked values are set to zero before moment accumulation. When the floor is inactive, $\mathcal E_{kn}$ is a normalised weighted average.

The level is $\mu_{kn}=\mathcal E_{kn}[x_n]$, where $x_n=(x_{1n},\ldots,x_{Ln})$. With mean time $\bar u_{kn}=\mathcal E_{kn}[u]$, define temporal variance and time--value covariance as
\begin{equation}
\begin{aligned}
v_{kn}&=\max\{\mathcal E_{kn}[u^2]-\bar u_{kn}^2,0\},\\
c_{kn}&=\mathcal E_{kn}[u x_n]-\bar u_{kn}\mu_{kn}.
\end{aligned}
\label{eq:covariance}
\end{equation}
The nonnegative clamp removes small negative variances due to numerical cancellation. Actual time differences enter both the weights and the moments. The summary therefore changes when the observation times change, even if their order and values do not.

\subsubsection{Supported slopes}
A local slope estimated from very few effective observations can be unreliable. We use a stabilized effective count $\nu_{kn}$ and support factor $\chi_{kn}$:
\begin{equation}
\begin{aligned}
\nu_{kn}&=\frac{D_{kn}^2}{\max\{\sum_i w_{ki}^2M_{in},\epsilon\}},\\
\chi_{kn}&=\frac{\max\{\nu_{kn}-1,0\}}{\nu_{kn}+1},\\
b_{kn}&=\chi_{kn}\frac{c_{kn}}{v_{kn}+\rho},\qquad \rho=10^{-3}.
\end{aligned}
\label{eq:slope}
\end{equation}
The ridge coefficient $\rho$ limits sensitivity to small temporal variance. With inactive floors and normalized weights $p_i=w_{ki}M_{in}/D_{kn}$, the effective count is $1/\sum_i p_i^2$. Equal weights on $m$ samples give $\nu_{kn}=m$, whereas a dominant single sample gives a value near one. Consequently, $\chi_{kn}$ suppresses slopes with little support. It shrinks slopes according to their effective observation support. Empty and singleton histories have zero slope, including cases in which a numerical floor is active.

\subsubsection{Observation state}
For each variable, let $o_n=\sum_i M_{in}$ be its observation count and $a_n$ its age at $t_0$. If variable $n$ has a latest observation at time $t_n^*$, then $a_n=\operatorname{clip}(t_0-t_n^*,0,1)$. Otherwise, set $a_n=1$ and $\ell_n=0$. The availability indicator is $z_n^{\mathrm{obs}}=\mathbf1[o_n>0]$, where $\mathbf1$ is the indicator function. Define the descriptor:
\begin{equation}
\boldsymbol r_n=[a_n,(o_n+1)^{-1},z_n^{\mathrm{obs}}]^{\mathsf T}.
\label{eq:descriptor}
\end{equation}
Its three entries describe staleness, sparsity, and availability. 

\subsubsection{Cached summary}
The encoder outputs $T_n=\{\mu_{kn},b_{kn}\}_{k=1}^K\cup\{\ell_n,a_n,o_n,z_n^{\mathrm{obs}}\}$ for each variable and stores $t_0$ once. No part of this summary depends on a learned parameter or on the forecast queries.

\subsection{Bounded Local Correction}
\label{sec:local}

\subsubsection{Damped horizon}
The Damped horizon block converts the elapsed time into a bounded extrapolation step $h_n(q)$, which controls how strongly the historical slopes contribute to the prediction.

Specifically, for a query time $q$, define the nonnegative forecast distance $\Delta(q)=\max\{q-t_0,0\}$ and represented elapsed time $e_n(q)=\Delta(q)+a_n$. A learned positive coefficient $d_n=\operatorname{softplus}(\theta_n)$ controls the extrapolation step
\begin{equation}
h_n(q)=\frac{1-\exp[-d_ne_n(q)]}{d_n}.
\label{eq:horizon}
\end{equation}
Here $\theta_n\in\mathbb R$ is trainable and $\operatorname{softplus}(x)=\log(1+e^x)$. For small $d_ne_n$, $h_n$ approximates elapsed time. For a long horizon, it approaches $1/d_n$. Thus, a poorly estimated slope is not multiplied by an arbitrarily large forecast distance. 

\subsubsection{Feature vector}
The local feature vector has $F=2K+2$ entries:
\begin{equation}
\boldsymbol\phi_n(q)=\begin{bmatrix}
\{\mu_{kn}-\ell_n\}_{k=1}^K\\
\{b_{kn}h_n(q)\}_{k=1}^K\\
\ell_n\\
\ell_n\big(\exp[-d_ne_n(q)]-1\big)
\end{bmatrix}.
\label{eq:features}
\end{equation}
The first group describes historical levels relative to the latest value. The second describes supported trend increments. The final two entries retain the latest value and its damped change. They allow a small readout to combine level correction, trend extrapolation, and attenuation.

\subsubsection{Channel-wise linear head + shared MLP (width $H$) }
The residual combines a variable-specific linear map with a shared nonlinear map:
\begin{equation}
\delta_n^{\mathrm{local}}(q)=\boldsymbol\alpha_n^{\mathsf T}\boldsymbol\phi_n(q)+\beta_n
+g_\psi([\boldsymbol\phi_n(q),\boldsymbol r_n]).
\label{eq:local}
\end{equation}
The coefficients are $\boldsymbol\alpha_n\in\mathbb R^F$ and $\beta_n\in\mathbb R$. Square brackets concatenate vectors. The shared function is
$g_\psi(\xi)=w_2^{\mathsf T}\tanh(W_1\xi+b_1)+b_2$,
where $W_1\in\mathbb R^{H\times(F+3)}$, $b_1,w_2\in\mathbb R^H$, and $b_2\in\mathbb R$. The hidden width $H$ is small, and the same network is used for every variable. This sharing avoids allocating a nonlinear network to each channel. We initialise $\boldsymbol\alpha_n$, $\beta_n$, $w_2$, and $b_2$ to zero, so the initial local residual is zero.

\subsection{Low-Rank Cross-Variable Correction}
\label{sec:cross}
A variable's own summary may be insufficient when its measurements are sparse or stale. We form a source vector $s\in\mathbb R^N$ from recent values and mix it through a rank-$r$ factorisation:
\begin{equation}
s_n=\ell_n z_n^{\mathrm{obs}}\exp(-a_nd_n),\qquad
\boldsymbol z=s^{\mathsf T}UV.
\label{eq:mixing}
\end{equation}
The learned matrices are $U\in\mathbb R^{N\times r}$ and $V\in\mathbb R^{r\times N}$. Their product uses $2Nr$ parameters instead of $N^2$. The integer $r$ is distinct from the descriptor vector $\boldsymbol r_n$. The availability indicator removes unobserved sources, and the exponential factor reduces the contribution of older sources. Self-contributions are allowed because the diagonal of $UV$ is not removed.

A target-specific gate then controls the context contribution:
\begin{equation}
\begin{aligned}
\gamma_n&=\sigma(\boldsymbol\eta_n^{\mathsf T}\boldsymbol r_n),\\
\delta_n^{\mathrm{cross}}(q)&=\gamma_n\exp[-\Delta(q)d_n]z_n.
\end{aligned}
\label{eq:cross}
\end{equation}
Here $\boldsymbol\eta_n\in\mathbb R^3$ is learned and $\sigma(x)=1/(1+e^{-x})$. Source attenuation depends on the age of each input variable. The gate depends on the observation state of the target variable. The final decay depends on forecast distance. The learned gate is not constrained to increase monotonically with missingness.

We initialise $U$ from a zero-mean Gaussian with standard deviation $1/\sqrt N$ and set $V=0$. The initial cross residual is therefore zero. Together with the local initialisation, the complete model starts from persistence, $\widehat y_n(q)=\ell_n$, and learns only the corrections supported by training.

\subsection{Training, Reuse, and Computational Properties}
\subsubsection{Objective}
For a minibatch $\mathcal B$, let $A_{bjn}$ indicate whether the target of record $b$, query $j$, and variable $n$ is observed. Training minimizes
\begin{equation}
\mathcal L_{\mathcal B}=
\frac{\sum_{b\in\mathcal B,j,n}A_{bjn}(\widehat y_{bjn}-y_{bjn})^2}
{\max\{1,\sum_{b\in\mathcal B,j,n}A_{bjn}\}}.
\label{eq:loss}
\end{equation}
All four datasets use Adam with a fixed learning rate, gradient-norm clipping at 1, at most 200 epochs, and early stopping after 30 epochs without validation-MSE improvement. 

\subsubsection{Exact reuse of the historical summary}
Write the model as $f_\theta(X,M,t,q)=D_\theta(E(X,M,t),q)$, where $E$ is the fixed encoder and $D_\theta$ the learned readout. For an unchanged history, replacing $E(X,M,t)$ by its stored value leaves both the prediction and the gradient with respect to $\theta$ unchanged. The encoder has no parameter gradient that would otherwise need to be updated. Learned damping, gates, and query features remain in $D_\theta$ and are recomputed at each step.

This reuse is valid only while the history, normalisation, decay bank, and numerical constants are unchanged. A new observation or a changed scale bank requires a new summary. The current implementation does not maintain an exact streaming update across a moving origin. For multiple queries against the same window, however, the history is encoded once, and each query is decoded independently. Adding or removing another query cannot change a prediction already requested.

\subsubsection{Bounds and size}
For $x\geq0$, $0\leq1-e^{-x}\leq\min\{x,1\}$. Substitution into Eq.~\eqref{eq:horizon} gives
\begin{equation}
0\leq h_n(q)\leq\min\{e_n(q),1/d_n\}.
\label{eq:bound}
\end{equation}
For fixed $d_n>0$, Eq.~\eqref{eq:bound} bounds the extrapolation step $h_n$. Conditional on masks, times, and learned parameters, Eq.~\eqref{eq:cross} has the form $\ell^{\mathsf T}D_sUV D_q$, where $D_s$ and $D_q$ are diagonal source and target factors. Its rank is at most $r$. The rank statement concerns the cross residual only, not the complete nonlinear predictor.

The linear heads, biases, damping coefficients, and gates contain $N(2K+7)$ parameters. The factorisation adds $2Nr$ and the shared MLP adds $H(2K+7)+1$. Hence
\begin{equation}
P=N(2K+7+2r)+H(2K+7)+1.
\label{eq:parameters}
\end{equation}
This count is independent of $L$ and $Q$. For batch size $B$, encoding takes $O(BLNK)$ work, the local readout takes $O(BQNKH)$, and cross-variable mixing takes $O(BNr+BQN)$. The summary stores $B[N(2K+4)+1]$ scalar values. These bounds make clear which costs depend on the observed history and which remain for each query. Full inference still includes encoding whenever a fresh history is supplied.

\subsection{Why the Summary Is Stable and Reusable}
\label{sec:properties}
The representation is deliberately restricted. Its usefulness depends on retaining a supported local trend, controlling extrapolation, and separating fixed historical work from learned computation. We establish the properties needed for those choices below. 

\subsubsection{Connection to weighted local regression}
Fix a variable and a scale, and temporarily omit their subscripts. If the numerical floors are inactive, the normalised weights are $p_i=w_iM_i/\sum_j w_jM_j$. Consider the local line with intercept $a$ and slope $b$:
\begin{equation}
\min_{a,b}\ \sum_i p_i(x_i-a-bu_i)^2+\rho b^2.
\label{eq:ridge_objective}
\end{equation}
Its intercept condition gives $a=\mu-b\bar u$. Substitution into the slope condition gives $b_*=c/(v+\rho)$. Thus Eq.~\eqref{eq:slope} is a ridge slope followed by the support-dependent shrinkage $\chi$. The shrinkage is an additional modelling choice, rather than the exact optimiser of Eq.~\eqref{eq:ridge_objective}. When denominator floors are active, $\mathcal E$ is not a probability average, and this regression interpretation does not hold exactly.

For $m$ observed entries with positive weights, Cauchy--Schwarz gives $1/m\leq\sum_i p_i^2\leq1$. Therefore $1\leq\nu\leq m$ and $0\leq\chi<1$. Equal weights recover the observation count, whereas concentration on one observation reduces the effective count toward one. For example, normalized weights $(1/2,1/2)$ give $\nu=2$ and $\chi=1/3$. Weights $(0.99,0.01)$ give $\nu\approx1.02$ and $\chi\approx0.01$. Two recorded points therefore need not provide two equally informative supports for a slope.

\begin{proposition}[Empty and singleton histories]
In real arithmetic, the stabilised moment encoder assigns zero slope to an empty history and to a history with one observed value.
\end{proposition}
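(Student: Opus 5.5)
The plan is to fix a variable $n$ and a scale $k$ and show directly from Eq.~\eqref{eq:slope} that $b_{kn}=0$. Since $b_{kn}=\chi_{kn}\,c_{kn}/(v_{kn}+\rho)$ and $v_{kn}+\rho\ge\rho>0$, the ratio is always finite. It therefore suffices to show that either $\chi_{kn}=0$ or $c_{kn}=0$ in each case. We treat the empty history and the singleton history separately, and in each case we keep track of which floors (the $\epsilon$ in $D_{kn}$ and in the denominator of $\nu_{kn}$) are active.

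\emph{Empty history.} If $M_{in}=0$ for all $i$, then every masked sum $\sum_i w_{ki}M_{in}f_i$ vanishes, so $D_{kn}=\epsilon$. Hence $\mathcal E_{kn}[f]=0$ for every $f$, and so $\mu_{kn}=\bar u_{kn}=\mathcal E_{kn}[u^2]=\mathcal E_{kn}[ux_n]=0$. It follows that $c_{kn}=0-0\cdot0=0$, and therefore $b_{kn}=0$ regardless of $\chi_{kn}$. As a cross-check, $\nu_{kn}=\epsilon^2/\epsilon=\epsilon<1$, so $\chi_{kn}=0$ as well. Either route gives the claim.

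\emph{Singleton history.} Suppose exactly one index $i^*$ has $M_{i^*n}=1$, and write $w=w_{ki^*}\in(0,1]$. The weight is positive because the clipped exponent lies in $[-60,0]$. The main subtlety is that the floor may or may not be active, depending on whether $w\ge\epsilon$. We handle this through the support factor, which avoids committing to either regime. We have $D_{kn}=\max\{w,\epsilon\}$ and $\sum_i w_{ki}^2M_{in}=w^2$, so $\nu_{kn}=\max\{w,\epsilon\}^2/\max\{w^2,\epsilon\}$.
\begin{itemize}
\item If $w\ge\epsilon$, the numerator equals $w^2$. If in addition $w^2\ge\epsilon$, then $\nu_{kn}=1$. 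Otherwise $\nu_{kn}=w^2/\epsilon<1$.
\item If $w<\epsilon$, then $\nu_{kn}=\epsilon^2/\epsilon=\epsilon<1$.
\end{itemize}
In every subcase $\nu_{kn}\le1$, so $\max\{\nu_{kn}-1,0\}=0$, which gives $\chi_{kn}=0$ and hence $b_{kn}=0$.

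As a second argument for the unfloored case, we can also show $c_{kn}=0$ directly. With $p_{i^*}=1$, we get $\mathcal E[ux]=u_{i^*}x_{i^*}$, $\bar u=u_{i^*}$ and $\mu=x_{i^*}$, so $c=0$. In the floored case, the quantities are scaled by $w/\epsilon$ and give $c=(w/\epsilon)(1-w/\epsilon)u_{i^*}x_{i^*}$, which need not vanish. This is why the support factor is the robust route.

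The only step requiring care is the singleton case with active floors, where the moments are no longer probability averages. The case split on $\nu_{kn}\le1$ above resolves it. The qualifier ``in real arithmetic'' enters because we use exact equalities such as $\nu_{kn}=1$, which floating-point rounding might perturb to slightly above one.
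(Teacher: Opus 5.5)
Your proof is correct and follows the paper's argument: the same three-way split of the singleton case ($w^2\ge\epsilon$, $\epsilon\le w<\sqrt\epsilon$, $w<\epsilon$) gives $\nu\le 1$ and hence $\chi=0$, and zero covariance handles the empty history. Your extra observation that $c=(w/\epsilon)(1-w/\epsilon)u_{i^*}x_{i^*}$ need not vanish when $w<\epsilon$ is also correct; it explains why the support factor, rather than the covariance, is what carries the floored case.
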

\begin{proof}
An empty history has zero covariance. For a singleton of weight $w$, if $w^2\geq\epsilon$, the floors are inactive, covariance is zero, and $\nu=1$. If $\epsilon\leq w<\sqrt\epsilon$, then $D=w$ and $\nu=w^2/\epsilon<1$, so $\chi=0$. If $w<\epsilon$, then $D=\epsilon$ and $\nu=\epsilon<1$, again giving $\chi=0$. The boundary cases give the same slope.
\end{proof}
This property prevents the model from interpreting one observation as evidence for a local trend. It does not force the entire prediction to be zero for a missing variable. Learned biases and information from other variables can still contribute.

\subsubsection{Extrapolation and conditional interaction rank}
For fixed $d>0$, differentiation of $h(e)=(1-e^{-de})/d$ yields $h'(e)=e^{-de}>0$ and $h''(e)=-de^{-de}<0$. The effective step increases with elapsed time, but its marginal increase decreases. Together with Eq.~\eqref{eq:bound}, this explains how the model retains short-horizon trend information without linearly extending the same slope indefinitely. The bound is conditional on $d$. As $d$ approaches zero, its asymptotic limit $1/d$ increases. The levels, learned readout weights, and complete forecast remain unbounded.

The diagonal factors in the cross residual are
\begin{equation}
\begin{aligned}
D_s&=\operatorname{diag}\big(z_n^{\mathrm{obs}}e^{-a_nd_n}\big),\\
D_q&=\operatorname{diag}\big(\gamma_n e^{-\Delta d_n}\big).
\end{aligned}
\label{eq:diagonal_factors}
\end{equation}
Here $\operatorname{diag}$ places its arguments on the diagonal. Since multiplication cannot increase rank, $\operatorname{rank}(D_sUV D_q)\leq r$. Missing source variables contribute zero through $D_s$, while each target's gate can modulate the transferred value independently. The statement holds conditional on the observation state. It does not restrict the rank of the complete predictor across histories with different masks or times.

\subsubsection{Exact reuse during learning}
\begin{proposition}[Prediction and gradient preservation]
Let $E$ be a deterministic encoder independent of learned parameters $\theta$. For a fixed history $x$, replacing $E(x)$ in $D_\theta(E(x),q)$ by its stored value preserves predictions and derivatives with respect to $\theta$ wherever those derivatives exist.
\end{proposition}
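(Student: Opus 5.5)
The plan is to treat the model as a composition of functions and show that, once the history $x$ is fixed, the encoder output enters the readout only as a constant that does not depend on $\theta$.

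First, set $e^\star=E(x)$. Because $E$ is deterministic and depends on neither $\theta$ nor $q$ (as established for the cached summary $T_n$ and the origin $t_0$), evaluating $E(x)$ at any later optimisation step, for any parameter value $\theta$ and any query $q$, returns exactly the same value $e^\star$. So the stored value is the same mathematical object as a freshly computed one.

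The prediction claim then follows by substitution. For every $\theta$ and $q$ we have $f_\theta(x,q)=D_\theta(E(x),q)=D_\theta(e^\star,q)$.

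For the gradient claim, I would compare two functions of $\theta$. The first is $\theta\mapsto D_\theta(E(x),q)$. The second is the cached function $\theta\mapsto D_\theta(e^\star,q)$. Since $E(x)$ does not vary with $\theta$, these two functions agree identically on the whole parameter space, not just at one point. Functions that agree on an open neighbourhood have the same derivative wherever either one is differentiable.

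Equivalently, one can argue with the chain rule. Write $F(\theta)=D(\theta,E(x))$. At any point where $D$ is differentiable, the total derivative is $\partial_\theta D+\partial_e D\cdot \partial_\theta E(x)$, and the second term vanishes because $\partial_\theta E(x)=0$. I would still lead with the identical-functions argument, because it does not require $D$ to be jointly differentiable in $e$. This matters because the encoder contains clips and maxima that may be non-differentiable in the history. For the same reason, the qualifier ``wherever those derivatives exist'' only needs to cover the non-smooth points of $D_\theta$ in $\theta$, for example gradient clipping elsewhere in training.

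The loss in Eq.~\eqref{eq:loss} is a fixed function of the predictions, the targets and the mask $A$, none of which change under caching. So the loss and its $\theta$-gradient are preserved as well, and this extends immediately over a minibatch by linearity of the sum.

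The main obstacle is a precision point rather than a mathematical one: making ``stored value'' exact. The argument needs the cached $e^\star$ to be bitwise the value $E(x)$ would produce. I would state explicitly that the history, normalisation, decay bank $\Lambda$ and the constants $\epsilon,\rho$ are unchanged, matching the validity conditions listed for reuse. In floating point, I would also assume deterministic evaluation of $E$. Under these conditions the proof is exact; without them, the statement as given (a deterministic $E$) already excludes the problematic cases.
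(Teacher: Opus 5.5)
Your proposal is correct and follows essentially the same argument as the paper's proof: fix $z=E(x)$, note that the cached and recomputed evaluations both compute $D_\theta(z,q)$ at the same argument, and observe that the encoder adds nothing to the $\theta$-derivative. Your identical-functions-of-$\theta$ formulation is slightly cleaner than the paper's chain-rule remark, because it needs no differentiability of $D$ in its summary argument; one small slip is that gradient clipping acts on gradients after they are computed and is not a non-smooth point of $D_\theta$, so that example is misplaced.
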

\begin{proof}
Set $z=E(x)$. Both evaluations compute $D_\theta(z,q)$ at the same argument. Since $\partial E(x)/\partial\theta=0$, the chain-rule contribution through the encoder is zero. Both derivatives are the derivative of the same readout.
\end{proof}
Consequently, with identical initialisation, minibatch order, numerical execution, and optimiser state, replacing recomputation by exact stored summaries leaves the sequence of Adam updates unchanged. This claim concerns gradients with respect to model parameters. It does not preserve an input gradient through a detached cache. Changing preprocessing, augmenting the historical values, or learning the decay rates would invalidate the stated reuse rule.

\subsubsection{Execution and storage}
The operator has three stages. First, replace masked values with zero, find the historical origin, and accumulate the moments and observation state. Second, for each query, compute damping, local features, and the shared readout. Third, compute the source mixture once for the history, apply the target gates and query decay, and add both residuals to the latest value. No query-axis aggregation is required, so query chunks can be evaluated independently while sharing the same summary.

A stored summary occupies $4[N(2K+4)+1]$ bytes per record in FP32. Under the final configurations, this is 676 bytes for HumanActivity, 164 for USHCN, 1,732 for PhysioNet, and 5,380 for MIMIC. These values exclude the targets, decoder activations, and storage-system overhead. Retaining all summaries trades memory for repeated encoder work, so materialisation is useful only when a history is revisited enough times to justify its construction and storage.

\begin{table*}[h]
\centering
\caption{Forecasting performance on four IMTS datasets (mean $\pm$ standard deviation). Bold and underline mark the best and second-best means. $\dagger$ denotes repaired PhysioNet.}
\label{tab:accuracy}

\fontsize{10}{8}\selectfont
\setlength{\tabcolsep}{2pt}
\renewcommand{\arraystretch}{2.2}

\resizebox{\textwidth}{!}{%
\begin{tabular}{
@{}l
!{\color{black!40}\vrule width .35pt}cc
!{\color{black!40}\vrule width .35pt}cc
!{\color{black!40}\vrule width .35pt}cc
!{\color{black!40}\vrule width .35pt}cc@{}
}
\toprule
\bfseries Dataset
& \multicolumn{2}{c}{\bfseries HumanActivity}
& \multicolumn{2}{c}{\bfseries USHCN}
& \multicolumn{2}{c}{\bfseries PhysioNet $^\dagger$}
& \multicolumn{2}{c}{\bfseries MIMIC} \\
\midrule
\bfseries Metric
& \bfseries MSE $\downarrow$
& \bfseries MAE $\downarrow$
& \bfseries MSE $\downarrow$
& \bfseries MAE $\downarrow$
& \bfseries MSE $\downarrow$
& \bfseries MAE $\downarrow$
& \bfseries MSE $\downarrow$
& \bfseries MAE $\downarrow$ \\
\midrule
PrimeNet
& 4.2507$\pm$0.0041
& 1.7018$\pm$0.0011
& 0.4930$\pm$0.0015
& 0.4954$\pm$\textbf{0.0018}
& 0.8128$\pm$\textbf{0.00004}
& 0.6977$\pm$\textbf{0.00006}
& 0.9073$\pm$\textbf{0.0001}
& 0.6614$\pm$\textbf{0.0001} \\

NeuralFlows
& 0.1722$\pm$0.0090
& 0.3150$\pm$0.0094
& 0.2087$\pm$0.0258
& 0.3157$\pm$0.0187
& 0.3910$\pm$0.00303
& 0.4360$\pm$0.00201
& 0.6085$\pm$0.0101
& 0.5306$\pm$0.0066 \\

CRU
& 0.1387$\pm$0.0073
& 0.2607$\pm$0.0092
& 0.2168$\pm$0.0162
& 0.3180$\pm$0.0248
& 0.5917$\pm$0.00340
& 0.5604$\pm$0.00199
& 0.5895$\pm$0.0092
& 0.5151$\pm$0.0048 \\

mTAN
& 0.0993$\pm$0.0026
& 0.2219$\pm$0.0047
& 0.5561$\pm$0.2020
& 0.5015$\pm$0.0968
& 0.3711$\pm$0.00459
& 0.4231$\pm$0.00356
& 0.9408$\pm$0.1126
& 0.6755$\pm$0.0459 \\

SeFT
& 1.3786$\pm$0.0024
& 0.9762$\pm$0.0007
& 0.3345$\pm$0.0022
& 0.4083$\pm$0.0084
& 0.7912$\pm$0.00223
& 0.6860$\pm$0.00079
& 0.9230$\pm$0.0015
& 0.6628$\pm$\underline{0.0008} \\

GNeuralFlow
& 0.3936$\pm$0.1585
& 0.4541$\pm$0.0841
& 0.2205$\pm$0.0421
& 0.3286$\pm$0.0412
& 0.8433$\pm$0.15458
& 0.6791$\pm$0.06523
& 0.8957$\pm$0.0209
& 0.6450$\pm$0.0072 \\

GRU-D
& 0.1893$\pm$0.0627
& 0.3253$\pm$0.0485
& 0.2097$\pm$0.0493
& 0.3045$\pm$0.0305
& 0.3270$\pm$0.00157
& 0.3917$\pm$\underline{0.00091}
& 0.4759$\pm$0.0100
& 0.4526$\pm$0.0055 \\

Raindrop
& 0.0916$\pm$0.0072
& 0.2114$\pm$0.0072
& 0.2035$\pm$0.0336
& 0.3029$\pm$0.0264
& 0.3330$\pm$0.00157
& 0.3965$\pm$0.00138
& 0.6754$\pm$0.1829
& 0.5444$\pm$0.0868 \\

Warpformer
& 0.0449$\pm$0.0010
& 0.1228$\pm$0.0018
& 0.1888$\pm$0.0598
& 0.2939$\pm$0.0591
& 0.2888$\pm$0.00141
& 0.3556$\pm$0.00095
& 0.4302$\pm$0.0035
& 0.4025$\pm$0.0014 \\

tPatchGNN
& 0.0443$\pm$0.0009
& 0.1247$\pm$0.0031
& 0.1885$\pm$0.0403
& 0.3084$\pm$0.0479
& 0.2923$\pm$0.00291
& 0.3595$\pm$0.00321
& 0.4431$\pm$0.0115
& 0.4077$\pm$0.0088 \\

GraFITi
& 0.0437$\pm$\underline{0.0005}
& 0.1221$\pm$0.0017
& 0.1691$\pm$\underline{0.0093}
& 0.2777$\pm$0.0248
& \textbf{0.2873}$\pm$0.00213
& \underline{0.3531}$\pm$0.00214
& 0.4359$\pm$0.0455
& 0.4142$\pm$0.0297 \\

APN
& \textbf{0.0421$\pm$0.0001}
& \textbf{0.1159}$\pm$\underline{0.0006}
& \textbf{0.1590}$\pm$0.0137
& \textbf{0.2611}$\pm$0.0167
& 0.2947$\pm$0.00310
& 0.3610$\pm$0.00610
& \underline{0.4292}$\pm$0.0027
& \underline{0.4016}$\pm$0.0016 \\
\midrule

\textbf{CTLite}
& \underline{0.0426}$\pm$0.0010
& \underline{0.1167}$\pm$\textbf{0.0005}
& \underline{0.1648}$\pm$\textbf{0.0008}
& \underline{0.2707}$\pm$\underline{0.0024}
& \underline{0.2876}$\pm$\underline{0.00017}
& \textbf{0.3527}$\pm$0.00096
& \textbf{0.4048}$\pm$\underline{0.0008}
& \textbf{0.3844}$\pm$0.0009 \\
\bottomrule
\end{tabular}%
}
\end{table*}

\section{Experiments}
We evaluate CTLite on four IMTS benchmarks: HumanActivity, USHCN, PhysioNet, and MIMIC. Our experiments examine whether compact, reusable historical summaries can support accurate forecasting at a low computational cost. We first describe the datasets, evaluation protocol, and implementation settings, then compare forecasting accuracy and computational efficiency. We further investigate how inference and training costs change with batch size, quantify the benefit of reusing historical summaries, and assess the contribution of each modelling component through ablation studies. We report CTLite's results over five random seeds to characterise performance variability.

\begin{table}[!t]
\centering
\caption{Split sizes and observed target counts.}
\label{tab:datafull}

\small
\setlength{\tabcolsep}{4pt}
\renewcommand{\arraystretch}{1.14}

\resizebox{0.48\textwidth}{!}{%
\begin{tabular}{@{}l!{\color{black!40}\vrule width .35pt}rrrrrrr@{}}
\toprule
\bfseries Dataset
& \bfseries $N$
& \multicolumn{3}{c}{\bfseries Records (Samples)}
& \multicolumn{3}{c}{\bfseries Observed targets} \\
\cmidrule(lr){3-5}
\cmidrule(lr){6-8}
& 
& \bfseries Train
& \bfseries Val
& \bfseries Test
& \bfseries Train
& \bfseries Val
& \bfseries Test \\
\midrule
HumanActivity & 12 & 949 & 193 & 218 & 25,872 & 5,205 & 5,910 \\
USHCN & 5 & 901 & 101 & 112 & 2,964 & 334 & 368 \\
PhysioNet & 36 & 9,703 & 1,079 & 1,199 & 217,124 & 24,061 & 27,008 \\
MIMIC & 96 & 17,212 & 1,913 & 2,125 & 187,419 & 20,874 & 23,015 \\
\bottomrule
\end{tabular}
}
\end{table}

\begin{table*}[!t]
\centering
\caption{Matched efficiency at batch size 32 (mean $\pm$ standard deviation). Throughput is in thousands of records/s, and memory is peak allocated MiB. Bold is best, underline second best.}
\label{tab:efficiency}

\small
\setlength{\tabcolsep}{12pt}
\renewcommand{\arraystretch}{1.14}

\resizebox{\textwidth}{!}{%
\begin{tabular}{@{}c!{\color{black!40}\vrule width .35pt}lrrrrrr@{}}
\toprule
\bfseries Dataset
& \bfseries Model
& \bfseries Params $\downarrow$
& \bfseries Train (ms) $\downarrow$
& \bfseries Infer (ms) $\downarrow$
& \bfseries Throughput $\uparrow$
& \bfseries Train MiB $\downarrow$
& \bfseries Infer MiB $\downarrow$ \\
\midrule
\multirow{3}{*}{HumanActivity} & APN
& \underline{28,789}
& 8.905$\pm$0.034
& \underline{3.530$\pm$0.021}
& \underline{9.065$\pm$0.055}
& 463.83$\pm$1.00
& 229.90$\pm$0.41 \\

& GraFITi
& 249,409
& \underline{8.887$\pm$0.053}
& 4.205$\pm$0.007
& 7.610$\pm$0.012
& \underline{280.21$\pm$1.08}
& \underline{62.92$\pm$0.31} \\

& \textbf{CTLite}
& \textbf{365}
& \textbf{2.232$\pm$0.017}
& \textbf{0.665$\pm$0.008}
& \textbf{48.119$\pm$0.579}
& \textbf{18.46$\pm$0.04}
& \textbf{10.24$\pm$0.03} \\
\midrule
\multirow{3}{*}{USHCN} & APN
& \underline{1,970}
& \underline{3.440$\pm$0.024}
& \underline{1.062$\pm$0.006}
& \underline{30.119$\pm$0.175}
& \underline{176.30$\pm$2.71}
& \underline{69.54$\pm$1.04} \\

& GraFITi
& 495,617
& 8.516$\pm$0.114
& 3.940$\pm$0.055
& 8.122$\pm$0.113
& 339.13$\pm$8.80
& 137.81$\pm$3.01 \\

& \textbf{CTLite}
& \textbf{88}
& \textbf{2.205$\pm$0.016}
& \textbf{0.654$\pm$0.003}
& \textbf{48.926$\pm$0.250}
& \textbf{17.70$\pm$0.01}
& \textbf{9.55$\pm$0.01} \\
\midrule
\multirow{3}{*}{PhysioNet} & APN
& \underline{6,701}
& \underline{2.692$\pm$0.020}
& \underline{0.813$\pm$0.002}
& \underline{39.351$\pm$0.085}
& \underline{59.84$\pm$0.00}
& \underline{31.20$\pm$0.00} \\

& GraFITi
& 250,945
& 9.531$\pm$0.621
& 4.778$\pm$0.411
& 6.736$\pm$0.565
& 325.74$\pm$36.61
& 59.46$\pm$6.36 \\

& \textbf{CTLite}
& \textbf{889}
& \textbf{2.223$\pm$0.035}
& \textbf{0.662$\pm$0.007}
& \textbf{48.346$\pm$0.480}
& \textbf{18.25$\pm$0.00}
& \textbf{10.05$\pm$0.00} \\
\midrule
\multirow{3}{*}{MIMIC} & APN
& \underline{13,777}
& \underline{7.749$\pm$0.449}
& \underline{3.028$\pm$0.157}
& \underline{10.593$\pm$0.579}
& 387.14$\pm$48.19
& 181.86$\pm$15.15 \\

& GraFITi
& 254,785
& 10.808$\pm$1.399
& 5.241$\pm$0.924
& 6.311$\pm$1.456
& \underline{363.21$\pm$76.24}
& \underline{78.28$\pm$13.94} \\

& \textbf{CTLite}
& \textbf{2,537}
& \textbf{2.238$\pm$0.018}
& \textbf{0.667$\pm$0.006}
& \textbf{47.978$\pm$0.443}
& \textbf{20.99$\pm$0.27}
& \textbf{12.78$\pm$0.35} \\
\bottomrule
\end{tabular}%
}
\end{table*}

\subsection{Experimental Setup}
\subsubsection{Datasets and evaluation}
We use HumanActivity, USHCN, repaired PhysioNet, and MIMIC, with 12, 5, 36, and 96 variables. Clinical data originate from the PhysioNet 2012 challenge and MIMIC \cite{silva2012,johnson2016}. Table~\ref{tab:datafull} lists the supplied split sizes. We retain their normalisation, history windows, query timestamps, and masks. In particular, repaired PhysioNet uses history in $[0,36)$ hours and the first three retained future hourly bins within $[36,48)$. An observed target is a scalar value, not an entire query timestamp.

Let $\Omega_S$ be the observed scalar targets in split $S$, and let $T_S=|\Omega_S|$. The reported metrics are
\begin{equation}
\begin{aligned}
\operatorname{MSE}_S&=T_S^{-1}\!\sum_{(b,j,n)\in\Omega_S}(\widehat y_{bjn}-y_{bjn})^2,\\
\operatorname{MAE}_S&=T_S^{-1}\!\sum_{(b,j,n)\in\Omega_S}|\widehat y_{bjn}-y_{bjn}|.
\end{aligned}
\label{eq:split_metrics}
\end{equation} 
We aggregate errors over all observed scalar targets, including incomplete batches, and report five-seed (2024--2028) means and sample standard deviations: For seed errors $e_1,\ldots,e_5$, we report $\bar e=\sum_s e_s/5$ and sample deviation $s_e=[\sum_s(e_s-\bar e)^2/4]^{1/2}$.

\subsubsection{Final configurations and selection}
Table~\ref{tab:settings} gives all dataset-specific model and training settings. The numerical constants $\epsilon=10^{-6}$ and $\rho=10^{-3}$ are shared. Dropout is not used. The weight decay is coupled Adam regularisation. Each dataset uses one configuration for all five seeds.

\begin{table*}[!t]
\centering
\caption{Final model and training settings. All tasks use constant learning rates and raw Adam weights.}
\label{tab:settings}

\small
\setlength{\tabcolsep}{12pt}
\renewcommand{\arraystretch}{1.14}

\begin{tabular}{@{}l!{\color{black!40}\vrule width .35pt}lcccccc@{}}
\toprule
\bfseries Dataset
& \bfseries Fixed rates $\Lambda$
& \bfseries $r$
& \bfseries $H$
& \bfseries Learning rate
& \bfseries Batch size
& \bfseries Weight decay
& \bfseries Params \\
\midrule
HumanActivity & $\{0,1,4,16,64\}$ & 1 & 8 & 0.01 & 16 & 0.0001 & 365 \\
USHCN & $\{0,8\}$ & 1 & 2 & 0.0003 & 4 & 0.01 & 88 \\
PhysioNet  & $\{0,2,8,32\}$ & 4 & 4 & 0.001 & 64 & 0.0003 & 889 \\
MIMIC & $\{0,2,8,32,128\}$ & 4 & 8 & 0.001 & 32 & 0.0001 & 2,537 \\
\bottomrule
\end{tabular}
\end{table*}

\subsubsection{Baselines and Data Preparation}
To evaluate the performance of CTLite, we compare it with 12 baseline models. These baseline models can be broadly categorised into two groups: (1) IMTS Classification/Imputation Models, including PrimeNet \cite{chowdhury2023primenet}, SeFT \cite{horn2020}, mTAN \cite{shukla2021}, GRU-D \cite{che2018}, and Warpformer \cite{zhang2023warpformer}. (2) IMTS Forecasting Models, including NeuralFlows \cite{bilovs2021neural}, CRU \cite{schirmer2022modeling}, GNeuralFlow \cite{gneuralflow2024}, tPatchGNN \cite{zhang2024}, GraFITi \cite{yalavarthi2024}, and APN \cite{liu2026}.
Baseline results for HumanActivity, USHCN, and MIMIC are adopted from \cite{liu2026}. For PhysioNet, we use a corrected version that reconstructs hourly observations from raw records, resolves variable aliases, treats negative values as missing, and replaces the legacy 35.5-hour cutoff with a history interval
of $[0,36)$ hours. Prediction targets are the first three retained future hourly bins within $[36,48)$. Patient partitions remain unchanged, and normalisation uses training statistics only. The correction changes the observed test targets from 23,694 to 27,008, making errors across versions non-comparable.
We therefore use the corrected PhysioNet baseline results alongside CTLite evaluated on the corrected caches.

\subsection{Predictive Accuracy}
Forecasting performance on four IMTS datasets (mean $\pm$ standard deviation) is shown in TABLE \ref{tab:accuracy}. CTLite has the lowest mean MSE and MAE on MIMIC among the reported models. The errors are 0.40476 and 0.38438, improving on the best published means by 5.69\% and 4.29\%. On repaired PhysioNet, its MSE of 0.28762 is second to GraFITi's 0.2873, while its MAE of 0.35265 is the lowest reported mean. 

On HumanActivity and USHCN, CTLite ranks second in both metrics. Its MSE is 1.09\% and 3.66\% above the best published mean, respectively. It shows that a very small readout can be competitive while preserving a clear cost advantage.

\subsubsection{Variation across training seeds}
Table~\ref{tab:individual} gives every validation and test result of the complete model. Validation determines the checkpoint within a run, and test values are recorded only for that checkpoint. This exposes the variation behind the aggregate scores. Low variation across these runs indicates stability to the selected training seeds on this split.

\begin{table}[!t]
\centering
\caption{Validation and test errors of the complete model by seed.}
\label{tab:individual}
\small
\setlength{\tabcolsep}{3pt}
\renewcommand{\arraystretch}{1.14}
\resizebox{\columnwidth}{!}{%
\begin{tabular}{@{}llrrrr@{}}
\toprule
\bfseries Dataset
& \bfseries Seed
& \bfseries Val MSE
& \bfseries Val MAE
& \bfseries Test MSE
& \bfseries Test MAE \\
\midrule
\multirow{5}{*}{HumanActivity}
 & 2024 & 0.045863 & 0.118920 & 0.042599 & 0.117128 \\
 & 2025 & 0.046127 & 0.119902 & 0.042174 & 0.116331 \\
 & 2026 & 0.047570 & 0.120034 & 0.044177 & 0.117122 \\
 & 2027 & 0.046258 & 0.118295 & 0.041547 & 0.116031 \\
 & 2028 & 0.045992 & 0.119011 & 0.042301 & 0.116980 \\
\midrule
\multirow{5}{*}{USHCN}
 & 2024 & 0.353462 & 0.308676 & 0.166031 & 0.272905 \\
 & 2025 & 0.352962 & 0.305179 & 0.164978 & 0.268582 \\
 & 2026 & 0.358588 & 0.308169 & 0.164755 & 0.269127 \\
 & 2027 & 0.355586 & 0.307593 & 0.164687 & 0.273676 \\
 & 2028 & 0.353245 & 0.305297 & 0.163671 & 0.269114 \\
\midrule
\multirow{5}{*}{PhysioNet}
 & 2024 & 0.299604 & 0.354958 & 0.287338 & 0.351516 \\
 & 2025 & 0.299745 & 0.356185 & 0.287629 & 0.352951 \\
 & 2026 & 0.299682 & 0.356594 & 0.287757 & 0.353648 \\
 & 2027 & 0.299585 & 0.356553 & 0.287721 & 0.353373 \\
 & 2028 & 0.299437 & 0.354797 & 0.287664 & 0.351764 \\
\midrule
\multirow{5}{*}{MIMIC}
 & 2024 & 0.418445 & 0.390700 & 0.405562 & 0.383714 \\
 & 2025 & 0.420998 & 0.392848 & 0.404776 & 0.385185 \\
 & 2026 & 0.420267 & 0.391029 & 0.403561 & 0.383412 \\
 & 2027 & 0.419227 & 0.392369 & 0.405495 & 0.385361 \\
 & 2028 & 0.419151 & 0.391649 & 0.404404 & 0.384247 \\
\bottomrule
\end{tabular}%
}
\end{table}

\subsection{Efficiency on Matched Workloads}
\subsubsection{Measurement protocol}
We compare CTLite, APN, and GraFITi on one RTX 3090 using PyTorch 2.5.1+cu124, FP32, four CPU threads, and disabled TF32 matrix multiplication. For each dataset and seed, the same 32 training records are supplied with identical values, masks, query times, and within-batch padding. All efficiency experiments use the same five random seeds (2024–2028), with results reported as the mean and sample standard deviation across runs. APN and GraFITi retain their official architectures, optimiser settings, and native losses. In particular, GraFITi retains its variable-averaged loss and graph construction. 

Peak memory is the CUDA allocator's peak allocated memory after warm-up, including resident inputs and model state. Counters are reset before each measured operation. Training additionally includes gradients and Adam state. Allocator-reserved but not used memory and memory outside PyTorch are excluded from the displayed peak, so these values are not total device-memory occupancy. Throughput is averaged from the five individual rates, rather than calculated as the reciprocal of average latency. 

\subsubsection{Size, latency, and memory}
Table~\ref{tab:efficiency} and Fig.~\ref{fig:tradeoff} compare computational costs at a matched batch size of $B=32$. CTLite achieves the lowest parameter count, training and inference latency, and peak allocated memory, together with the highest throughput, on all four datasets. 

\begin{figure}[t]
 \centering
\includegraphics[width=\linewidth]{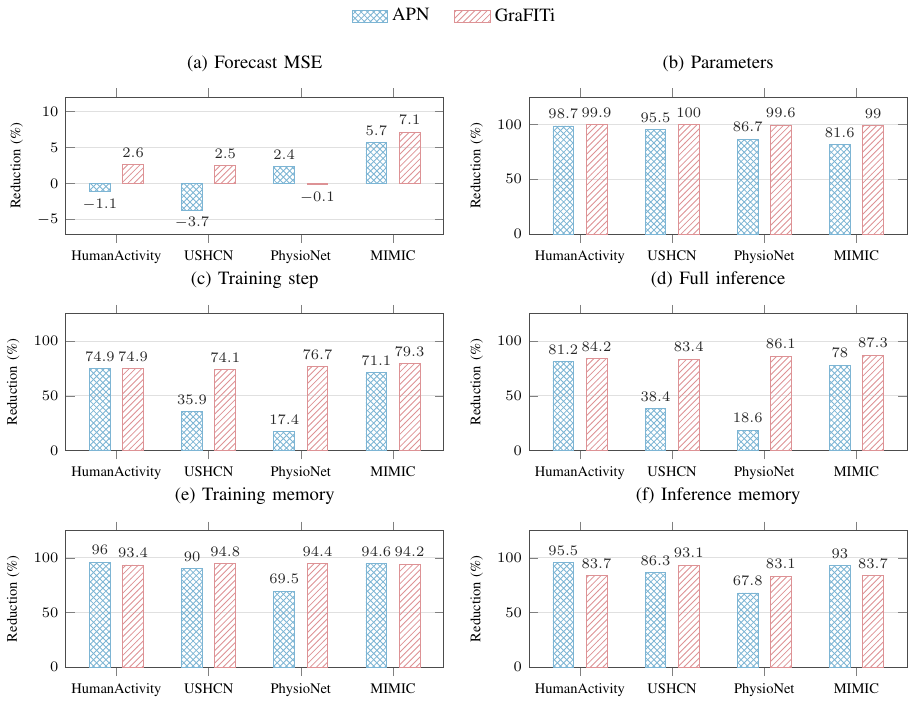}
\caption{Relative reductions, $100(1-m_{\mathrm{CTLite}}/m_{\mathrm{base}})$. Positive values favour CTLite.}
 \label{fig:tradeoff}
\end{figure}

CTLite contains only 88--2,537 parameters, requiring 352--10,148 bytes of FP32 parameter storage.
Relative to APN and GraFITi, the parameter count is reduced by 81.59--98.73\% and 99.00--99.98\%, respectively.

The compact components also provide substantial runtime benefits. CTLite completes full inference in
0.654--0.667~ms, reducing latency by 18.60--81.16\% compared with APN and 83.40--87.27\% compared with GraFITi. Its throughput reaches 47.98--48.93 thousand records per second, corresponding to 1.23--5.31$\times$ that of APN and 6.02--7.60$\times$ that of GraFITi. For each run, throughput is computed as $1000B/t$, where $t$ is inference latency in milliseconds. Training steps require 2.205--2.238~ms, yielding reductions of 17.42--79.29\% across the two baselines. The improvement over APN is smallest on PhysioNet, where its inference and training costs are already relatively low. Thus, the magnitude of the runtime benefit depends on both the dataset and the baseline, despite consistently lower costs for CTLite under this workload.

CTLite also reduces the memory required for execution. Peak allocated memory is 17.70--20.99~MiB during training and 9.55--12.78~MiB during inference, representing reductions of 69.50--96.02\% and 67.80--95.54\%, respectively, across the two baselines. Here, one MiB equals $2^{20}$ bytes. Unlike parameter storage, these measurements include resident inputs and intermediate tensors, as well as gradients and optimiser state during training. The memory savings therefore extend beyond the model weights to the measured execution footprint. All primary measurements include historical encoding without using cached features.

\subsubsection{CPU-only deployment}
We evaluate single-record inference ($B=1$) on an Intel Core i7-12700KF using one and four CPU threads. All models process identical requests in FP32, with full history encoding and no feature caching. Inputs reside in CPU memory, and timing excludes data loading and process startup. We compute median and 95th-percentile (p95) request latency separately for each seed and report their means and sample standard deviations over five seeds.

\begin{figure}[t]
 \centering
\includegraphics[width=\linewidth]{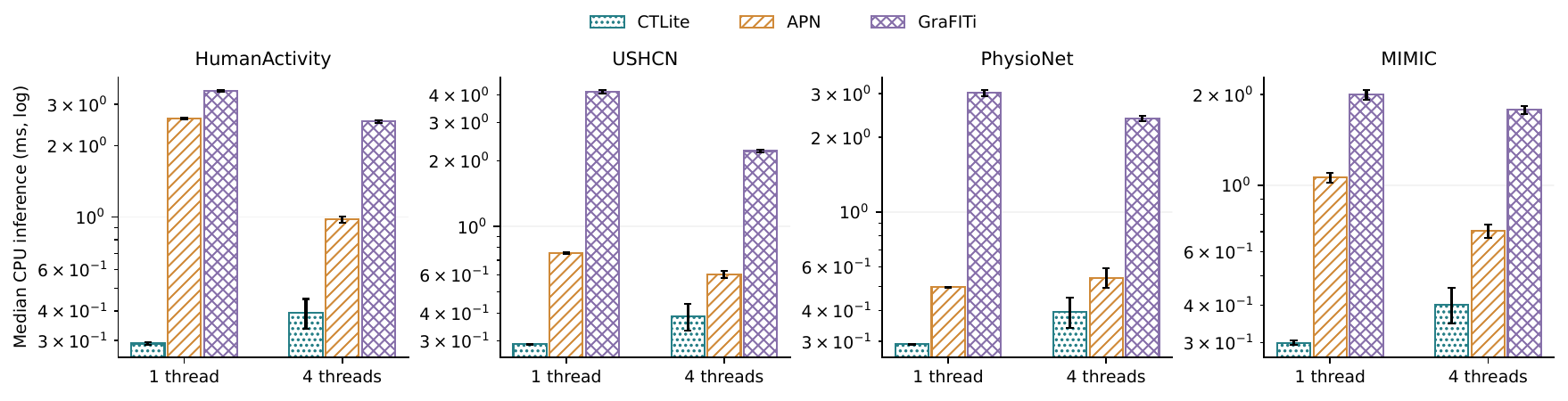}
\caption{ Full per-request CPU inference without cached histories. Bars show five-seed mean median latency, and sample SD. The log axis accommodates model differences. Input tensors are CPU resident; disk IO and process startup are excluded.}
 \label{fig:cpu_deployment}
\end{figure}

CTLite achieves the lowest mean median latency in all eight dataset--thread settings. With one thread, its latency is $0.291\pm0.003$, $0.289\pm0.002$, $0.292\pm0.001$, and $0.300\pm0.004$~ms on HumanActivity, USHCN, PhysioNet, and MIMIC, respectively. Across both thread settings, CTLite reduces mean median latency by 27.03--88.86\% relative to APN and 77.44--93.01\% relative to GraFITi. It also achieves the lowest mean p95 latency in every setting, indicating that the advantage extends beyond typical requests to the latency tail.

Increasing the thread count does not improve CTLite's latency, suggesting limited benefit from additional parallelism for its small computational workload. Nevertheless, CTLite remains faster than both baselines with four threads. These results complement the GPU measurements by demonstrating efficient single-record execution without accelerator hardware or cached summaries.

\subsection{Batch Scaling}
\label{sec:batch_scaling}
We repeat the complete benchmark at $B\in\{1,16,32,64\}$ using the final model settings. Each seed selects 64 training records per dataset. All three models receive the same values, masks, times, and padding at a given batch size. Each model--dataset--seed--batch combination uses a fresh GPU process, yielding 240 isolated workloads. This prevents allocations and optimiser state from carrying across batch sizes. Table~\ref{tab:efficiency} and Fig.~\ref{fig:tradeoff} use the $B=32$ slice of this same measurement set.

\begin{figure*}[t]
 \centering
\includegraphics[width=\linewidth]{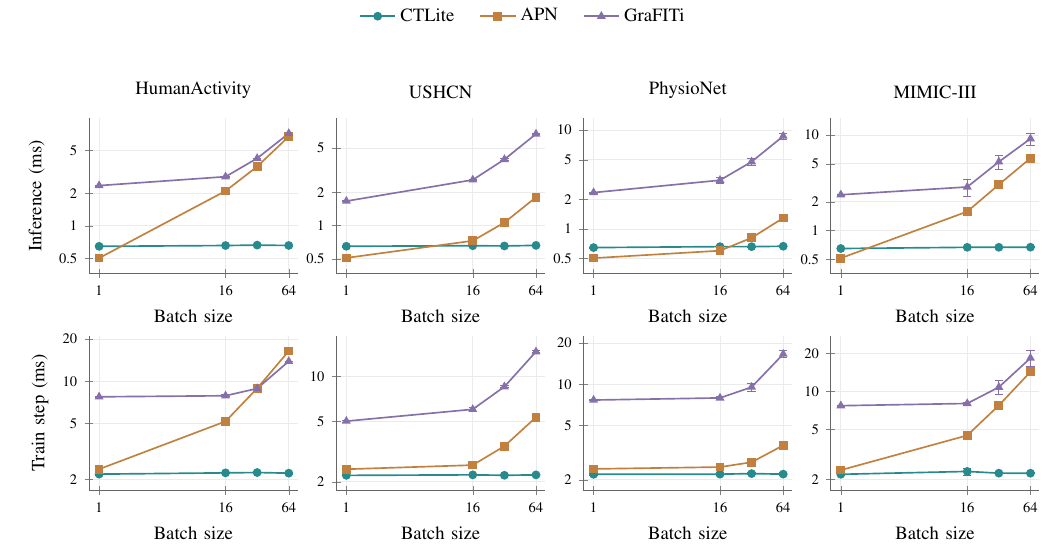}
\caption{Batch scaling. Full inference above and full training steps below. Error bars show sample SD over five seeds.}
\label{fig:batch_scaling}
\end{figure*}

Fig.~\ref{fig:batch_scaling} reports full inference and full training steps, including historical encoding. Both axes use logarithmic scales, and error bars show the sample standard deviation of five run medians. CTLite's mean inference cost grows from 0.648--0.650 ms at $B=1$ to 0.660--0.669 ms at $B=64$. Its training-step cost remains within 2.174--2.310 ms over all measured datasets and batch sizes. The shallow, shared readout therefore accommodates more records without a proportional increase in elapsed GPU time on these workloads.

Especially, as the batch size increases from 1 to 64, CTLite's inference and training-step times remain nearly constant across all four datasets, whereas both baselines exhibit substantial increases, particularly at larger batches. CTLite therefore processes more records per step with little additional latency, translating larger batches into higher throughput. This behaviour benefits both batched prediction and training, where the computational cost per record decreases as more records are processed together. The runtime advantage becomes especially clear at batch sizes of 32 and 64, where CTLite is faster than both baselines on every dataset. Although APN achieves lower inference latency at batch size 1 and on PhysioNet at batch size 16, CTLite provides more favourable scaling over the evaluated batch sizes. These results demonstrate CTLite's efficiency for batched workloads under the same tested hardware and sequence lengths.

\begin{table}[!t]
\centering
\caption{Historical encoding and computation with a stored summary (ms).}
\label{tab:cache}
\small
\setlength{\tabcolsep}{2pt}
\renewcommand{\arraystretch}{1.14}

\begin{tabular*}{\columnwidth}{@{\extracolsep{\fill}}l!{\color{black!40}\vrule width .35pt}ccc@{}}
\toprule
\bfseries Dataset & \bfseries Encode once & \bfseries Cached inference & \bfseries Cached train step\\
\midrule
HumanActivity & 0.370$\pm$0.004 & 0.336$\pm$0.004 & 1.817$\pm$0.024\\
USHCN & 0.371$\pm$0.003 & 0.331$\pm$0.004 & 1.786$\pm$0.006\\
PhysioNet & 0.376$\pm$0.008 & 0.341$\pm$0.009 & 1.801$\pm$0.038\\
MIMIC & 0.373$\pm$0.005 & 0.331$\pm$0.003 & 1.806$\pm$0.031\\
\bottomrule
\end{tabular*}

\end{table}

\begin{table*}[!t]
\centering
\caption{Component ablations. Mean $\pm$ sample SD over five seeds.}
\label{tab:component_ablation}

\scriptsize
\setlength{\tabcolsep}{1pt}
\renewcommand{\arraystretch}{1.1}

\resizebox{\textwidth}{!}{%
\begin{tabular}{@{}l|cc|cc|cc|cc@{}}
\hline
\textbf{Dataset}
& \multicolumn{2}{c|}{\textbf{HumanActivity}}
& \multicolumn{2}{c|}{\textbf{USHCN}}
& \multicolumn{2}{c|}{\textbf{PhysioNet}}
& \multicolumn{2}{c}{\textbf{MIMIC}} \\
\hline
\textbf{Variant}
& MSE $\downarrow$ & MAE $\downarrow$
& MSE $\downarrow$ & MAE $\downarrow$
& MSE $\downarrow$ & MAE $\downarrow$
& MSE $\downarrow$ & MAE $\downarrow$ \\
\hline
CTLite
& $\mathbf{0.04256\pm0.00098}$
& $\mathbf{0.11672\pm0.00051}$
& $0.16482\pm0.00084$
& $0.27068\pm0.00241$
& $\mathbf{0.28762\pm0.00017}$
& $\mathbf{0.35265\pm0.00096}$
& $\mathbf{0.40476\pm0.00083}$
& $\mathbf{0.38438\pm0.00087}$ \\
\hline
w/o (b)
& $0.05438\pm0.00047$
& $0.12939\pm0.00068$
& $0.17590\pm0.00147$
& $0.28910\pm0.00150$
& $0.32980\pm0.00011$
& $0.37902\pm0.00064$
& $0.41306\pm0.00056$
& $0.38969\pm0.00034$ \\
w/o (c)
& $0.05738\pm0.00025$
& $0.12646\pm0.00025$
& $0.30717\pm0.00302$
& $0.32089\pm0.00446$
& $0.34859\pm0.00062$
& $0.38170\pm0.00033$
& $0.48643\pm0.00191$
& $0.42079\pm0.00132$ \\
w/o (d)
& $0.04259\pm0.00080$
& $\mathbf{0.11672\pm0.00131}$
& $\mathbf{0.16475\pm0.00093}$
& $\mathbf{0.26739\pm0.00097}$
& $0.28982\pm0.00008$
& $0.35309\pm0.00091$
& $0.42632\pm0.00062$
& $0.39337\pm0.00052$ \\
\hline
\end{tabular}%
}
\end{table*}

\subsection{Efficiency Gains from Reusing Historical Summaries}

CTLite separates historical encoding from prediction. The encoder converts observed values, timestamps, and masks into a fixed summary containing temporal moments and observation-state features. The prediction modules then combine this summary with the requested forecast times. Because the encoder contains no learned parameters, an unchanged history produces the same summary throughout training. The summary can therefore be computed once and reused across subsequent optimiser steps or repeated forecast queries, while the learned prediction modules are evaluated each time.

Table~\ref{tab:cache} measures the computational benefit of this reuse at a batch size of 32. \emph{Encode once} reports the cost of constructing the summary. \emph{Cached inference} reports inference time when the summary is already available in GPU memory. \emph{Cached train step} reports a training step using the stored summary, including prediction, loss computation, backpropagation, and the optimiser update. The latter two measurements exclude summary construction, which is performed before reuse begins.

Across the four datasets, summary construction takes 0.370--0.376~ms. Once the summary is available, inference requires only 0.331--0.341~ms, reducing latency by 48.47--50.40\% relative to CTLite's full inference in Table~\ref{tab:efficiency}. Training steps require 1.786--1.817~ms, a reduction of 18.58--19.33\%. These savings come from avoiding repeated historical encoding, without changing the forecasting computation. We verify exact equality between predictions obtained from stored summaries and complete forward execution in the measured implementation.

Reuse is beneficial when the same historical window is accessed repeatedly. A new observation, a shifted window, or a change in preprocessing or encoder settings requires recomputing the summary. Updating the learned prediction weights alone does not. The reported cached timings assume GPU-resident summaries
and exclude persistent-storage and retrieval overhead. \textit{All primary efficiency comparisons in Table~\ref{tab:efficiency} and Fig.~\ref{fig:batch_scaling} include historical encoding.}

\subsection{Component Analysis}
\label{sec:ablation}
We evaluate the three main modules in CTLite by removing each module separately and retraining the resulting variant. All experiments use the final dataset-specific configurations and five random seeds (2024--2028). The training objective, learning-rate schedule, and validation-based checkpoint selection remain unchanged, without separate tuning of the variants. Table~\ref{tab:component_ablation} reports the mean and sample standard deviation.

\textit{Ablation design.}
Removing (b) bypasses the continuous-time moment bank. To preserve the interface to the remaining modules, we replace each temporal mean $\mu_{kn}$ with the most recent observation $\ell_n$ and set each slope $b_{kn}$ to zero, where $k$ indexes temporal scales and $n$ indexes variables. Observation age, count, availability, and the time origin remain available. Removing (c) eliminates the entire local prediction branch, while removing (d) eliminates the entire cross-variable correction branch.

\textit{Temporal encoding and local prediction.}
Removing (b) increases mean MSE by 27.77\%, 6.72\%, 14.67\%, and 2.05\% on HumanActivity, USHCN, PhysioNet, and MIMIC, respectively. Removing (c) causes larger increases of 34.82\%, 86.37\%, 21.20\%, and 20.18\%. Both removals increase MSE and MAE in every paired seed across all four datasets. These results support the complementary roles of the moment encoder in summarising historical evolution and the local branch in translating this summary into predictions at the requested times.

\textit{Cross-variable correction.}
Removing (d) increases mean MSE from 0.28762 to 0.28982 on PhysioNet and from 0.40476 to 0.42632 on MIMIC, corresponding to increases of 0.76\% and 5.33\%. Both increases occur in all five seeds. Its effect on HumanActivity is small, while its removal slightly improves USHCN. Thus, the benefit of cross-variable correction depends on the dataset and is most evident on the two clinical benchmarks.

\section{Conclusion}
We presented CTLite, a compact framework for irregular multivariate time series forecasting that separates fixed historical summarisation from learned prediction. Continuous-time moments retain local levels and supported trends, while a small readout combines damped temporal corrections with observation-conditioned cross-variable information. This separation makes historical summaries reusable across optimisation steps and forecast queries without changing predictions or parameter gradients. Across four benchmarks, CTLite achieves the best or second-best mean MSE and MAE among the evaluated methods with only 88–2,537 trainable parameters, alongside lower latency and memory consumption than the latest models at a matched batch size of 32. These results show that a carefully structured historical representation can support accurate irregular forecasting with a small learned predictor and make repeated historical computation avoidable. 
\label{main:lastpage}


\bibliographystyle{IEEEtran}
\bibliography{IEEEabrv,references}

\end{document}